\documentclass[preprint,12pt,authoryear]{elsarticle}

\usepackage{natbib}
\bibliographystyle{elsarticle-harv}


\usepackage{graphicx}
\usepackage{amssymb}
\usepackage{amsthm}
\usepackage{amsmath}
\usepackage{amsfonts}
\usepackage{subfigure}
\usepackage[colorlinks]{hyperref}

\usepackage{lineno}
\newtheorem{definition}{Definition}
\newtheorem{thm}{Theorem} 
\newtheorem{lem}{Lemma}




\journal{Expert Systems with Applications}
\begin{document}
\begin{frontmatter}

\title{A divide and conquer method for symbolic regression \tnoteref{mytitlenote}}

\tnotetext[mytitlenote]{This work has been supported by the National Natural Science Foundation of China (Grant No. 11532014).}



\author[lhd]{Changtong Luo\corref{cor1}}
\ead{luo@imech.ac.cn}
\author[lhd,ucas]{Chen Chen}
\ead{chenchen@imech.ac.cn}
\author[lhd,ucas]{Zonglin Jiang}
\ead{zljiang@imech.ac.cn}

\cortext[cor1]{Corresponding author}

\address[lhd]{State Key Laboratory of High Temperature Gas Dynamics, Institute of Mechanics,\\ Chinese Academy of Sciences, Beijing 100190, China}

\address[ucas]{School of Engineering Sciences, University of Chinese Academy of Sciences,\\ Beijing, 100049, China}
\begin{abstract}
Symbolic regression aims to find a function that best explains the relationship between independent variables and the objective value based on a given set of sample data. Genetic programming (GP) is usually considered as an appropriate method for the problem since it can optimize functional structure and coefficients simultaneously. However, the convergence speed of GP might be too slow for large scale problems that involve a large number of variables. Fortunately, in many applications, the target function is separable or partially separable. This feature motivated us to develop a new method, divide and conquer (D\&C), for symbolic regression, in which the target function is divided into a number of sub-functions and the sub-functions are then determined by any of a GP algorithm. The separability is probed by a new proposed technique, Bi-Correlation test (BiCT). D\&C powered GP has been tested on some real-world applications, and the study shows that D\&C can help GP to get the target function much more rapidly.


\end{abstract}

\begin{keyword}
Mathematical modeling \sep Genetic programming  \sep Symbolic regression \sep Artificial intelligence \sep Divide and conquer
\end{keyword}
\end{frontmatter}


\section{Introduction}
\label{secIntro}

Symbolic regression (SR) is a data-driven modeling method which aims to find a function that best explains the relationship between independent variables and the objective value based on a given set of sample data \citep{Schmidt2009}. Genetic programming (GP) is usually considered as a good candidate for SR since it does not impose a priori assumptions and can optimize function structure and coefficients simultaneously. However, the convergence speed of GP might be too slow for large scale problems that involve a large number of variables.

Many efforts have been devoted trying to improve the original GP \citep{Koza92} in several ways. Some works suggest replacing its tree-based  method, with an integer string (Grammar Evolution) \citep{ONeill01}, or a parse matrix (Parse-Matrix Evolution) \citep{Luo2012}. These techniques can simplify the coding and decoding process but help little on improving the convergence speed. Some other works suggest confining its search space to generalized linear space, for example, Fast Function eXtraction \citep{McConaghy2011}, and Elite Bases Regression \citep{Chen2017}. These techniques can accelerate the convergence speed of GP, even by orders of magnitude. However, the speed is gained at the sacrifice of losing the generality, that is, the result might be only a linear approximation of the target function. 

Fortunately, in many applications, the target function is separable or partially separable (see section 2 for definitions). For example, in gas dynamics \citep{Anderson2006}, the heat flux coefficient $S_t$ of a flat plate could be formulated as 
\begin{equation}
\label{equ1}
S_t=2.274 \sin(\theta) \sqrt{\cos(\theta)}/\sqrt{Re_x},
\end{equation}
and the heat flux $q_s$ at the stagnation point of a sphere as
\begin{equation}
\label{equ2}
q_s=1.83\times10^{-4}v^3\sqrt{\rho/R}(1-h_w/h_s).
\end{equation}
In equation (\ref{equ1}), the two independent variables, $\theta$ and $Re_x$, are both separable. In equation (\ref{equ2}), the first three variables $v$, $\rho$, and $R$ are all separable, and the last two variables, $h_w$ and $h_s$, are not separable, but their combination ($h_w$, $h_s$) is separable. The function in equation (\ref{equ2}) is considered partially separable in this paper.

The feature of separability will be used in this paper to accelerate the optimization process of symbolic regression. Some basic concepts on function separability are defined in Section \ref{secDef}. Section \ref{secDAC} describes the overall work flow of the proposed method, divide and conquer. Section \ref{secBiCT} presents a special technique, bi-correlation test (BiCT), to determine the separability of a function. Numerical results are given in Section \ref{secNResults}, and the concluding remarks are drawn in Section \ref{SecConclusion}. 

\section{Basic concepts}
\label{secDef}
The proposed method in this paper is based on a new concept referred to as partial separability. 
It has something in common with existing separability definitions such as reference \citep{Berenguel2013} and \citep{dAvezac2011}, but is not exactly the same. To make it clear and easy to understand, we begin with some illustrative examples. The functions as follows could all be regarded as partially separable:
\begin{equation}
\label{eqToy1}
z=0.8+0.6*(\boxed{u^2+\cos(u)})+\boxed{\sin(v+w)*(v-w)};
\end{equation}
\begin{equation}
\label{eqToy2}
z=0.8+0.6*(\boxed{u^2+\cos(u)})-\boxed{\sin(v+w)*(v-w)};
\end{equation}
\begin{equation}
\label{eqToy3}
z=0.8+0.6*(\boxed{u^2+\cos(u)})*\boxed{\sin(v+w)*(v-w)}; 
\end{equation}
where the boxed frames are used to indicate sub-functions, $u$ is separable with respect to $z$, while $v$ and $w$ themselves are not separable, but their combination $(v, w)$ is considered separable. A simple example of non-separable function is $f(\boldsymbol x)=\sin(x_1+x_2+x_3)$.

\bigskip
More precisely, the separability could be defined as follows.

\begin{definition}
\label{def1}
A scalar function with $n$ continuous variables $f(\boldsymbol x)$ ($f:{\mathbb{R}^n} \mapsto \mathbb{R}$, $\boldsymbol x \in {\mathbb{R}^n}$) is said to be partially separable if and only if it can be rewritten as
\begin{equation}
\label{eq3}
f(\boldsymbol x) = {c_0} \otimes_1 {\varphi _1}({I_1}x) \otimes_2 {\varphi _2}({I_2}x) \otimes_3  \cdots  \otimes_m {\varphi _m}({I_m}x)
\end{equation}
where the binary operator $\otimes_i$ could be plus ($+$), minus ($-$), times($\times$).  ${I_i} $ is a sub-matrix of the identity matrix, and ${I_i} \in {\mathbb{R}^{{n_i} \times n}}$. The set ${\text{\{ }}{I_1},{I_2}, \cdots ,{I_m}{\text{\} }}$ is a partition of the identity matrix $I \in {\mathbb{R}^{n \times n}}$, $\sum\limits_{i = 1}^m {{n_i}}  = n$. The sub-function ${\varphi _i}$ is a scalar function such that
${\varphi _i}:{\mathbb{R}^{{n_i}}} \mapsto \mathbb{R}$. Otherwise the function is said to be non-separable.
\end{definition}

In this definition, the binary operator, division ($/$), is not included in $\otimes$ for simplicity. However, this does not affect much of its generality, since the sub-functions are not preset, and can be transformed as $\tilde{\varphi}_i(\cdot)$=  $1/\varphi_i(\cdot)$ if only $\varphi_i(\cdot)\neq 0$. 

A special case is that all variables are separable, which could be defined as follows. 

\begin{definition}
A scalar function with $n$ continuous variables $f(\boldsymbol x)$ ($f:{\mathbb{R}^n} \mapsto \mathbb{R}$, $\boldsymbol x \in {\mathbb{R}^n}$) is said to be completely separable if and only if it can be rewritten as equation (\ref{eq3}) and $n_i=1$ for all $i=1, 2, \cdots, m$.
\end{definition}

\section{Divide and conquer method}
\label{secDAC}
As above mentioned, many practical problems have the feature of separability. To make use of this feature to accelerate the optimization process of genetic programming, a new method, divide and conquer (D\&C), is proposed. It works as follows. 

First, a separability detection process is carried out to find out
whether the concerned problem is separable (at least partial separable) or not. The variables are identified one by one, and then their combinations. Once it (or the variable combination) is identified as separable, a sub-function $\varphi_i(x_i)$ (or $\varphi_i(I_i {\boldsymbol x})$ for variable combinations) will be assigned. In this way, the structure of target function $f(\boldsymbol x)$ could be divided into a set of sub-functions based on the separability information: $\varphi_i(I_i \boldsymbol x)$, $i= 1, 2, \cdots, m$.

Then, the sub-functions $\varphi_i(I_i \boldsymbol x)$ ($i= 1, 2, \cdots, m$) are optimized and determined one by one, using any of genetic programming algorithms, including classical GP, Grammatical Evolution \citep{ONeill01} and Parse-Matrix Evolution \citep{Luo2012}. When optimizing one sub-function, variables not involved ($(I-I_i) \boldsymbol x$) are fixed as constants. That is, only a small number of variables ($I_i \boldsymbol x$, which is only a subset of $\{ x_1, x_2, \cdots, x_n \}$) need be considered. 
This means the sub-function determination should be much easier than evolving the target function $f(\boldsymbol x)$ directly. 

\textbf{For example, in Equation (\ref{eqToy1}), Equation (\ref{eqToy2}), or Equation (\ref{eqToy3}), the sub-function $\varphi_1(u)=u^2+\cos(u)$ (or $\varphi_2(v, w)=\sin(v+w)*(v-w)$) has less number of variables and complexity than the original function. Thus, optimizing them one by one is much easier for GP. }

Finally, these sub-functions are properly combined to form the target function, which is referred to as a function recover process. This process is rather simple, and all traditional regression algorithms are qualified to accomplish this mission.

The work flow of D\&C could be described in Figure \ref{fig:workflow}.
\begin{figure}
\centering
\includegraphics[width=0.6\linewidth]{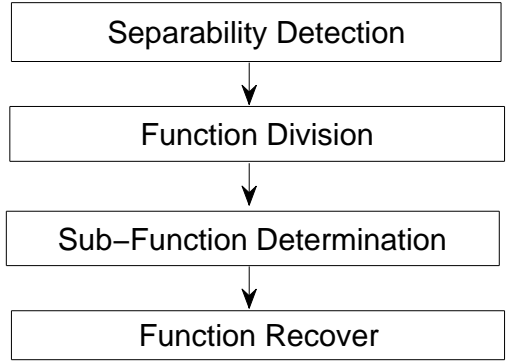}
\caption{Work flow of divide and conquer for symbolic regression}
\label{fig:workflow}
\end{figure}

\section{Bi-correlation test}
\label{secBiCT}
\subsection{Description}
The main idea of the proposed divide and conquer (D\&C) method is to make use of the separability feature to simplify the search process. Therefore, the most important and fundamental step (the separability detection process in Fig. 1) is to determine whether the concerned problem is separable (at least partial separable) or not. To fulfill this task, a special technique, Bi-correlation test (BiCT), is provided in this section. 

Consider independent variables $x_1, x_2, \cdots, x_n$ and the dependent $f$ as n+1 random variables, and the known data as sample points. Recall that the linear relation and  correlation coefficient of two random variables has the following relations.

\begin{lem}
\label{lemma0}
The two random variables $\xi$ and $\eta$ are linearly related with correlation coefficient 1 (i.e., $\rho_{\xi\eta}=1$) if and only if there exists two constants $a$, $b$ ($b\neq 0$), such that $P\{\eta=a+b\xi\}=1$.
\end{lem}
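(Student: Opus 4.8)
The plan is to prove both directions through the standard second-moment identity that links the correlation coefficient to the variance of a normalized difference, i.e. a Cauchy--Schwarz type argument. Throughout I write $\mu_\xi=E\xi$, $\mu_\eta=E\eta$, $\sigma_\xi^2=\operatorname{Var}(\xi)$, $\sigma_\eta^2=\operatorname{Var}(\eta)$, both assumed strictly positive (otherwise $\rho_{\xi\eta}$ is undefined), and recall $\rho_{\xi\eta}=\operatorname{Cov}(\xi,\eta)/(\sigma_\xi\sigma_\eta)$.

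For the forward implication I would introduce the standardized variables $\xi^*=(\xi-\mu_\xi)/\sigma_\xi$ and $\eta^*=(\eta-\mu_\eta)/\sigma_\eta$, each having mean $0$ and variance $1$, so that $\operatorname{Cov}(\xi^*,\eta^*)=\rho_{\xi\eta}$, and compute
\begin{equation}
\operatorname{Var}(\eta^*-\xi^*)=\operatorname{Var}(\eta^*)+\operatorname{Var}(\xi^*)-2\operatorname{Cov}(\xi^*,\eta^*)=2(1-\rho_{\xi\eta}).
\end{equation}
Setting $\rho_{\xi\eta}=1$ makes the right-hand side vanish, so $\eta^*-\xi^*$ has zero variance. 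The key step is then to conclude that a zero-variance random variable equals its mean almost surely; since $E(\eta^*-\xi^*)=0$, this yields $P\{\eta^*=\xi^*\}=1$. Unwinding the standardization gives $P\{\eta=a+b\xi\}=1$ with $b=\sigma_\eta/\sigma_\xi>0$ and $a=\mu_\eta-b\mu_\xi$.

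For the converse, assuming $P\{\eta=a+b\xi\}=1$, I would substitute this relation directly into the moment formulas to obtain $\operatorname{Cov}(\xi,\eta)=b\,\sigma_\xi^2$ and $\sigma_\eta=|b|\sigma_\xi$, whence $\rho_{\xi\eta}=b/|b|=\operatorname{sign}(b)$. This returns $\rho_{\xi\eta}=1$ precisely when $b>0$.

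The one genuine obstacle is the zero-variance-implies-almost-surely-constant step, which I would justify by Chebyshev's inequality (or directly from the fact that $E[(X-EX)^2]=0$ forces $X=EX$ with probability one). A secondary subtlety is the sign of $b$: strictly, $\rho_{\xi\eta}=1$ forces $b>0$, so the hypothesis $b\neq 0$ in the statement should be read as $b>0$ (the case $b<0$ yields $\rho_{\xi\eta}=-1$). I would note this refinement while proving the stated equivalence for the relevant case $b>0$.
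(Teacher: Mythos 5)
Your proof is correct. Note that the paper itself does not prove this lemma at all---it is stated as a recalled standard fact from probability theory---so there is no paper proof to compare against; your argument supplies exactly the justification the paper leaves implicit, via the standard route: standardize, compute $\operatorname{Var}(\eta^*-\xi^*)=2(1-\rho_{\xi\eta})$, and invoke the fact that a zero-variance random variable is almost surely constant (Chebyshev), with the converse following from direct substitution into the covariance. Your side remark is also a genuine (if minor) correction to the statement as printed: with $\rho_{\xi\eta}=1$ the constant $b$ is necessarily positive, so the lemma's hypothesis ``$b\neq 0$'' is accurate only for the two-sided version $|\rho_{\xi\eta}|=1$; as stated, the ``if'' direction fails for $b<0$. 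This imprecision is harmless for the paper's use of the lemma (the BiCT test only needs linear relatedness, not the sign), but it is worth flagging.
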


The correlation coefficient $ \rho$ could be estimated by the sample correlation coefficient $r$, which is defined as follows.
$$r=\frac{1}{n-1}  \sum\limits_{i = 1}^{N} \frac{(\xi _i - \bar \xi )} {\sigma _\xi } \cdot \frac{(\eta _i - \bar \eta )} {\sigma _\eta }$$
\noindent where $N$ is the number of observations in the sample set, $\sum$ is the summation symbol, $\xi_i$ is the $\xi$ value for observation i, $\bar \xi$ is the sample mean of $\xi$, $\eta_i$ is the $\eta$ value for observation i, $\bar \eta$ is the sample mean of $\eta$, $\sigma _\xi$ is the sample standard deviation of $\xi$, and $\sigma _\eta$ is the sample standard deviation of $\eta$.

Only continuous model functions are considered in this paper. As a result, the conclusion of Lemma \ref{lemma0} could be simplified as follows. 

\begin{itemize}
\item The two random variables ${\boldsymbol f}_A$ and ${\boldsymbol f}_B$ are linearly related (${\boldsymbol f}_B=a+b{\boldsymbol f}_A$) 
if and only if the sample correlation coefficient $r=1$ for any given sample set. 
\end{itemize}

Studies shows that the functional separability defined in the above section (See equation \ref{eq3}) could be observed with random sampling and linear correlation techniques. 

Without the loss of generality, a simple function with three variables ($f(\boldsymbol x)=f(x_1, x_2, x_3), x_i \in [a_i, b_i], i=1,2,3$) is considered to illustrate the implementation of the bi-correlation test. To find out whether the first variable $x_1$ is separable, two correlation tests are needed.

The first correlation test is carried out as follows. A set of random sample points in $[a_1, b_1]$ are generated, then these points are extended to three dimensional space with the rest variables ($x_2$ and $x_3$) fixed to a point $A$. We get a vector of function values ${\boldsymbol f}^{(A)}={\boldsymbol f}{(x_1,A)}=(f_{1}^A, f_{2}^A, \cdots, f_{N}^A)$, where $N$ is the number of sample points. Then these points are extended to three dimensional space with fixed $x_2$ and $x_3$ to another point $B$, We get another vector ${\boldsymbol f}^{(B)}={\boldsymbol f}{(x_1,B)}=(f_{1}^B, f_{2}^B, \cdots, f_{N}^B)$. It is obviously that the two vectors $f^{(A)}$ and $f^{(B)}$ will be linearly related if $x_1$ is separable. However, it is easy to show that this linear relation could NOT ensure its separability.

Then it comes to the second correlation test. Another set of random sample points in $[a_2, b_2]\times[a_3, b_3]$ are generated, then these points are extended to three dimensional space with the rest variable(s) ($x_1$ in this case) fixed to a point $C$, and get a vector ${\boldsymbol f}^{(C)}={\boldsymbol f}{(C, x_2, x_3)}$. Similarly, another vector ${\boldsymbol f}^{(D)}={\boldsymbol f}{(D, x_2, x_3)}$ is obtained. Again, the two vectors ${\boldsymbol f}^{(C)}$ and ${\boldsymbol f}^{(D)}$ needs to be linearly related to ensure the separability of  $x_1$.

\subsection{Proposition} 

Without the loss of generality, suppose we have a scalar function ${f}(\boldsymbol x)$ with $n$ continuous variables  (${f}:{\mathbb{R}^n} \mapsto \mathbb{R}$, $\boldsymbol x \in \Omega\subset {\mathbb{R}^n}$, and $\Omega =[a_1, b_1]\times[a_2, b_2]\times\cdots\times[a_n, b_n]$), and need to find out whether the first $m$ variable combination $(x_1, x_2,\cdots, x_m)$ are separable. Let the matrix $X_1$ be a set of $N$ random sample points from the subset $[a_1, b_1]\times[a_2, b_2]\times\cdots\times[a_m, b_m] \subset\mathbb{R}^{m}$, and
$${X_1} = \left[ {\begin{array}{*{20}{c}}
  {{x_{1}^{(1)}}}&{{x_{2}^{(1)}}}& \cdots &{{x_{m}^{(1)}}} \\ 
 {{x_{1}^{(2)}}}&{{x_{2}^{(2)}}}& \cdots &{{x_{m}^{(2)}}} \\ 
   \cdots & \cdots & \cdots & \cdots  \\ 
  {{x_{1}^{(N)}}}&{{x_{2}^{(N)}}}& \cdots &{{x_{m}^{(N)}}} 
\end{array}} \right].$$  
The rest variables $x_{m+1}, x_{m+2}, \cdots, x_n$ are fixed to two given points A and B in the subset $[a_{m+1}, b_{m+1}]\times[a_{m+2}, b_{m+2}]\times\cdots\times[a_n, b_n]\subset \mathbb{R}^{n-m}$, i.e., $\boldsymbol x_A=({{x_{m + 1}}^{(A)}},{{x_{m + 2}}^{(A)}}, \cdots,{{x_n}^{(A)}})$, $\boldsymbol x_B=({{x_{m + 1}}^{(B)}},{{x_{m + 2}}^{(B)}}, \cdots,{{x_n}^{(B)}})$.  \\
Let the matrix
${X_2}^{(A)} = \left[ \begin{gathered}
  1 \hfill \\
  1 \hfill \\
   \cdots  \hfill \\
  1 \hfill \\ 
\end{gathered}  \right] \boldsymbol x_A = \left[ {\begin{array}{*{20}{c}}
  {{x_{m + 1}}^{(A)}}&{{x_{m + 2}}^{(A)}}& \cdots &{{x_n}^{(A)}} \\ 
  {{x_{m + 1}}^{(A)}}&{{x_{m + 2}}^{(A)}}& \cdots &{{x_n}^{(A)}} \\ 
   \cdots & \cdots & \cdots & \cdots  \\ 
  {{x_{m + 1}}^{(A)}}&{{x_{m + 2}}^{(A)}}& \cdots &{{x_n}^{(A)}} 
\end{array}} \right]
$, and ${X_2}^{(B)} = \left[ \begin{gathered}
  1 \hfill \\
  1 \hfill \\
   \cdots  \hfill \\
  1 \hfill \\ 
\end{gathered}  \right] \boldsymbol x_B$. \\
Let the extended matrix ${X_A} = \left[ {\begin{array}{*{20}{c}}
  {{X_1}}&{{X_2}^{(A)}} 
\end{array}} \right]$, and ${X_B} = \left[ {\begin{array}{*{20}{c}}
  {{X_1}}&{{X_2}^{(B)}} 
\end{array}} \right]$. \\
Let ${\boldsymbol f_A}$ be the vector of which the $i$-th element is the function value of the $i$-th row of matrix ${X_A}$, i.e., ${\boldsymbol f_A} = f({X_A})$, and ${\boldsymbol f_B}$ is similarly defined, ${\boldsymbol f_B} = f({X_B})$.

\begin{lem}
\label{lemma1}
The two vectors ${\boldsymbol f_A} $ and ${\boldsymbol f_B} $ are linearly related if the function $f(\boldsymbol x)$ is separable with respect to the first $m$ variable combination $(x_1, x_2,\cdots, x_m)$.
\end{lem}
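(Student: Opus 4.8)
My plan is to reduce the separability hypothesis of Definition~\ref{def1} to a single structural statement — that $f$ is affine in the sub-function carrying the block variables — and then read off the linear relation by direct substitution. First I would apply Definition~\ref{def1} to the case in which $(x_1,\dots,x_m)$ form one block, so that the partition of the identity contains a sub-matrix $I_1$ selecting these variables and $f(\boldsymbol x)=c_0\otimes_1\varphi_1(x_1,\dots,x_m)\otimes_2\cdots\otimes_m\varphi_m(I_m\boldsymbol x)$, where every remaining $\varphi_j$ depends only on $(x_{m+1},\dots,x_n)$. The key observation is that the block sub-function $\varphi_1$ occurs \emph{exactly once} in this expression, which is built solely from $+,-,\times$. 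Treating $t=\varphi_1$ as a formal symbol, $f$ is therefore a polynomial of degree at most one in $t$, since no product can ever multiply two $t$-bearing factors. I would make this precise by a short induction on the number of operators, showing that any $\{+,-,\times\}$-expression with a single occurrence of $t$ equals $pt+q$ with $p,q$ free of $t$. This yields the normal form
\[
f(\boldsymbol x)=g(x_{m+1},\dots,x_n)\,\varphi_1(x_1,\dots,x_m)+h(x_{m+1},\dots,x_n),
\]
with $g,h$ independent of the block variables.

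With this normal form the lemma is immediate. Writing $\boldsymbol x_S^{(i)}=(x_1^{(i)},\dots,x_m^{(i)})$ for the $i$-th row of $X_1$, and recalling that $X_A$ (resp.\ $X_B$) fixes the remaining variables at $\boldsymbol x_A$ (resp.\ $\boldsymbol x_B$), I would evaluate
\[
(\boldsymbol f_A)_i=\alpha\,\varphi_1(\boldsymbol x_S^{(i)})+\beta,\qquad (\boldsymbol f_B)_i=\gamma\,\varphi_1(\boldsymbol x_S^{(i)})+\delta,
\]
where $\alpha=g(\boldsymbol x_A)$, $\beta=h(\boldsymbol x_A)$, $\gamma=g(\boldsymbol x_B)$, $\delta=h(\boldsymbol x_B)$ are constants because $\boldsymbol x_A,\boldsymbol x_B$ are fixed. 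Eliminating the common quantity $\varphi_1(\boldsymbol x_S^{(i)})$ (when $\alpha\neq0$) gives, for every $i$,
\[
(\boldsymbol f_B)_i=\frac{\gamma}{\alpha}\,(\boldsymbol f_A)_i+\Big(\delta-\frac{\gamma\beta}{\alpha}\Big),
\]
that is $\boldsymbol f_B=b\,\boldsymbol f_A+a$ with a single pair of constants $(a,b)$ common to all components. This is exactly the linear relation claimed, and by the continuous specialization of Lemma~\ref{lemma0} it forces the sample correlation coefficient to satisfy $r=1$ (up to sign) for any sample $X_1$.

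The routine part is the substitution; the part needing care is the structural step. I expect the main obstacle to be justifying the normal form $f=g\varphi_1+h$ cleanly from the chained expression of Definition~\ref{def1}, since the operators $+,-,\times$ are mixed and the chain must first be read as an expression tree; the single-occurrence induction then handles any such tree, but it must be stated so that $g$ and $h$ are genuinely free of $(x_1,\dots,x_m)$. A second point I would flag is the degenerate case $\alpha=g(\boldsymbol x_A)=0$: then $\boldsymbol f_A$ is constant and both the linear relation and the correlation coefficient become ill-defined. This is not vacuous — in the multiplicative example~(\ref{eqToy3}) the coefficient $g$ of the $u$-block vanishes whenever $\sin(v+w)(v-w)=0$ — so I would remark that drawing $\boldsymbol x_A,\boldsymbol x_B$ at random makes $\alpha,\gamma\neq0$ almost surely, which is precisely the regime in which the bi-correlation test is applied.
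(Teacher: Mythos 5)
Your proof is correct, but it takes a genuinely different and in fact more general route than the paper's. The paper reads ``separable with respect to $(x_1,\dots,x_m)$'' as the two-block form $f=\varphi_1(x_1,\dots,x_m)\otimes\varphi_2(x_{m+1},\dots,x_n)$ with a single operator $\otimes\in\{+,-,\times\}$, substitutes the fixed points to get $\boldsymbol f_A=\varphi_1(X_1)\otimes k_A$ and $\boldsymbol f_B=\varphi_1(X_1)\otimes k_B$ with scalars $k_A=\varphi_2(\boldsymbol x_A)$, $k_B=\varphi_2(\boldsymbol x_B)$, and then exhibits the linear relation by a three-way case analysis ($k_A/k_B\cdot\boldsymbol f_B$ for times, $k_A-k_B+\boldsymbol f_B$ for plus, $k_B-k_A+\boldsymbol f_B$ for minus). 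Your affine normal form $f=g(x_{m+1},\dots,x_n)\,\varphi_1+h(x_{m+1},\dots,x_n)$, obtained by the single-occurrence induction over the expression tree, absorbs all three cases into one substitution-and-elimination step and, more importantly, covers the full multi-block Definition~\ref{def1} with mixed operators (e.g.\ $f=\varphi_1\times\varphi_2+\varphi_3$), a case that the paper's two-block reduction cannot even express and that the paper explicitly defers to future work in its notes on BiCT. Your handling of the degenerate case $g(\boldsymbol x_A)=0$ also repairs a gap the paper leaves open: its ``times'' branch divides by $k_B$ without excluding $k_B=0$, and as you observe, at such a point $\boldsymbol f_A$ is constant, the sample correlation of Lemma~\ref{lemma0} is undefined, and a relation with nonzero slope fails (your example from Equation~(\ref{eqToy3}) is apt); your remark that randomly drawn $\boldsymbol x_A,\boldsymbol x_B$ avoid this almost surely is the right fix and matches how BiCT is actually run. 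What the paper's approach buys in exchange is brevity: under its standing simplification of two sub-functions joined by one operator, the conclusion is immediate without any structural induction.
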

\begin{proof}
Since the first $m$ variable combination $(x_1, x_2,\cdots, x_m)$ are separable, from definition \ref{def1}, we have $f(\boldsymbol x)=\varphi_1(x_1, x_2,\cdots, x_m) \otimes \varphi_2(x_{m+1}, x_{m+2},\cdots, x_n)$. Accordingly, the vector ${\boldsymbol f_A} = f({X_A})=\varphi_1(X_1) \otimes \varphi_2({X_2^{A}})= \varphi_1(X_1)\otimes k_A$, where $ \otimes$ is a component-wise binary operation, and $k_A=\varphi_2(\boldsymbol x_A)$ is a scalar. Similarly, the vector ${\boldsymbol f_B}= \varphi_1(X_1)\otimes k_B $. As a result, 
\[{\boldsymbol{f}}_A=\left\{ {\begin{array}{*{20}{l}}
  {{k_A}/{k_B} \cdot {{\boldsymbol{f}}_B}}&{{\text{if }} \otimes {\text{ is times}}} \\ 
  {{k_A} - {k_B} + {{\boldsymbol{f}}_B}}&{{\text{if }} \otimes {\text{ is plus}}} \\ 
  {{k_B} - {k_A} + {{\boldsymbol{f}}_B}}&{{\text{if }} \otimes {\text{ is minus}}} 
\end{array}} \right.\] 
which means the two vectors ${\boldsymbol f_A} $ and ${\boldsymbol f_B} $ are linearly related. 
\end{proof}

On the other hand, if the first $m$ variables are fixed to two given points C and D, and the rest of $n-m$ variables are randomly sampled. A similar proposition could be concluded as follows. Let\\ 
${X_1}^{(C)}  = \left[ \begin{gathered}
  1 \hfill \\
  1 \hfill \\
   \cdots  \hfill \\
  1 \hfill \\ 
\end{gathered}  \right]\left[ {\begin{array}{*{20}{c}}
  {{x_1}^{(C)}}&{{x_2}^{(C)}}& \cdots &{{x_m}^{(C)}} 
\end{array}} \right] =\left[ {\begin{array}{*{20}{c}}
  {{x_1}^{(C)}}&{{x_2}^{(C)}}& \cdots &{{x_m}^{(C)}} \\ 
  {{x_1}^{(C)}}&{{x_2}^{(C)}}& \cdots &{{x_m}^{(C)}} \\ 
   \cdots & \cdots & \cdots & \cdots  \\ 
  {{x_1}^{(C)}}&{{x_2}^{(C)}}& \cdots &{{x_m}^{(C)}} 
\end{array}} \right]$, \\
${X_1}^{(D)}  = \left[ \begin{gathered}
  1 \hfill \\
  1 \hfill \\
   \cdots  \hfill \\
  1 \hfill \\ 
\end{gathered}   \right]\left[ {\begin{array}{*{20}{c}}
  {{x_1}^{(D)}}&{{x_2}^{(D)}}& \cdots &{{x_m}^{(D)}} 
\end{array}} \right]
 $, \\
$
{X_2} = \left[ {\begin{array}{*{20}{c}}
  {{x_{m + 1}^{(1)}}}&{{x_{m + 2}^{(1)}}}& \cdots &{{x_{n}^{(1)}}} \\ 
  {{x_{m + 1}^{(2)}}}&{{x_{m + 2}^{(2)}}}& \cdots &{{x_{n}^{(2)}}} \\ 
   \cdots & \cdots & \cdots & \cdots  \\ 
  {{x_{m + 1}^{(N)}}}&{{x_{m + 2}^{(N)}}}& \cdots &{{x_{n}^{(N)}}} 
\end{array}} \right],
$
the $N\times n$ matrix 
${X_C} = \left[ {\begin{array}{*{20}{c}}
  {{X_1}^{(C)}}&{{X_2}} 
\end{array}} \right]$, and ${X_D} = \left[ {\begin{array}{*{20}{c}}
  {{X_1}^{(D)}}&{{X_2}} 
\end{array}} \right]$. Let ${\boldsymbol f_C}$ be the vector of which the $i$-th element is the function value of the $i$-th row of matrix ${X_C}$, i.e., ${\boldsymbol f_C} = f({X_C})$, and ${\boldsymbol f_D}$ is similarly defined, ${\boldsymbol f_D} = f({X_D})$.

\begin{lem}
\label{lemma2}
The two vectors ${\boldsymbol f_C} $ and ${\boldsymbol f_D} $ are linearly related if the function $f(\boldsymbol x)$ is separable with respect to the first $m$ variable combination $(x_1, x_2,\cdots, x_m)$.
\end{lem}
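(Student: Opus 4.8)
The plan is to follow the proof of Lemma \ref{lemma1} almost verbatim, but with the roles of the two sub-functions interchanged. First I would appeal to Definition \ref{def1}: separability with respect to the first $m$ variable combination means $f(\boldsymbol x)=\varphi_1(x_1,\dots,x_m)\otimes\varphi_2(x_{m+1},\dots,x_n)$, where $\otimes$ is one of $+$, $-$, $\times$, understood component-wise when applied to the vectors of sampled rows.

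The decisive difference from Lemma \ref{lemma1} is which factor collapses to a constant. Here the first $m$ coordinates are frozen (at $C$, respectively $D$) while the last $n-m$ coordinates vary over the rows of $X_2$, so it is now $\varphi_1$ that becomes scalar. Setting $k_C=\varphi_1(\boldsymbol x_C)$ and $k_D=\varphi_1(\boldsymbol x_D)$, evaluation on the rows of $X_C$ and $X_D$ gives $\boldsymbol f_C=k_C\otimes\varphi_2(X_2)$ and $\boldsymbol f_D=k_D\otimes\varphi_2(X_2)$, in which $\varphi_2(X_2)$ (the vector obtained by applying $\varphi_2$ to each row of the \emph{shared} block $X_2$) is common to both. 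I would then eliminate this common vector to write $\boldsymbol f_C$ in the form $a+b\,\boldsymbol f_D$ with $b\neq 0$, treating the three choices of $\otimes$ in turn: times yields $\boldsymbol f_C=(k_C/k_D)\,\boldsymbol f_D$, while plus and minus both yield $\boldsymbol f_C=(k_C-k_D)+\boldsymbol f_D$; in every case the two vectors are linearly related, which is the claim.

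The one point that genuinely requires care --- and the only place the computation is not a transcription of Lemma \ref{lemma1} --- is the non-commutativity of subtraction. Because $\varphi_1$ is the \emph{left} factor, the constant $k$ now sits on the left of $\otimes$, so the minus case must be redone rather than copied: solving $\boldsymbol f_D=k_D-\varphi_2(X_2)$ for $\varphi_2(X_2)$ and substituting gives $\boldsymbol f_C=k_C-k_D+\boldsymbol f_D$. Note that this coincides with the plus case, in contrast to Lemma \ref{lemma1}, where the constant lay on the right and plus versus minus produced the distinct forms $k_A-k_B+\boldsymbol f_B$ and $k_B-k_A+\boldsymbol f_B$. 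This sign bookkeeping aside, the argument is entirely routine.
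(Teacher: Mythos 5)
Your proposal is correct and follows essentially the same route as the paper's own proof: invoke Definition \ref{def1}, observe that freezing the first $m$ coordinates collapses $\varphi_1$ to the scalars $k_C=\varphi_1(\boldsymbol x_C)$ and $k_D=\varphi_1(\boldsymbol x_D)$, and eliminate the shared vector $\varphi_2(X_2)$ case by case, obtaining $k_C/k_D\cdot\boldsymbol f_D$ for times and $k_C-k_D+\boldsymbol f_D$ for both plus and minus. Your explicit remark on why the minus case now coincides with the plus case (the constant sits on the left of $\otimes$) is exactly the detail the paper compresses into its single ``plus or minus'' line, so nothing further is needed.
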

\begin{proof}
Since the first $m$ variable combination $(x_1, x_2,\cdots, x_m)$ are separable, from definition \ref{def1}, we have $f(\boldsymbol x)=\varphi_1(x_1, x_2,\cdots, x_m) \otimes \varphi_2(x_{m+1}, x_{m+2},\cdots, x_n)$. Accordingly, the vector ${\boldsymbol f_C} = f({X_C})=\varphi_1(X_1^{C}) \otimes \varphi_2({X_2})=k_C \otimes\varphi_2(X_2)$, where $ \otimes$ is a component-wise binary operation, and the scalar $k_C=\varphi_1(\boldsymbol x_C)$. Similarly, the vector ${\boldsymbol f_D}=k_D \otimes\varphi_2(X_2)$. As a result, 
\[{\boldsymbol{f}}_C=\left\{ {\begin{array}{*{20}{l}}
  {{k_C}/{k_D} \cdot {{\boldsymbol{f}}_D}}&{{\text{if }} \otimes {\text{ is times}}} \\ 
  {{k_C} - {k_D} + {{\boldsymbol{f}}_D}}&{{\text{if }} \otimes {\text{ is plus or minus}}} 
\end{array}} \right.\] 
which means the two vectors ${\boldsymbol f_C} $ and ${\boldsymbol f_D} $ are linearly related. 
\end{proof}

The above lemmas show that two function-value vectors must be linearly related if the target function has the separability feature, while the separable variables (or their complement variables) are fixed. These are necessary conditions for the separability identification of target function. The sufficient and necessary conditions are given as follows.

\begin{thm}
\label{thm1}
The function $f(\boldsymbol x)$ is separable with respect to the first $m$ variable combination $(x_1, x_2,\cdots, x_m)$ if and only if both of the flowing statements are true.\\
(1) Any two function-value vectors with fixed $(x_1, x_2,\cdots, x_m)$ are linearly related; \\
(2) Any two function-value vectors with fixed $(x_{m+1}, x_{m+2},\cdots, x_n)$ are linearly related.
\end{thm}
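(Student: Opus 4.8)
The two implications need separate handling, and one of them is already in hand. The necessity direction — that separability forces statements (1) and (2) — is exactly what Lemma \ref{lemma1} and Lemma \ref{lemma2} provide, since they show that any pair of function-value vectors obtained by fixing one block of variables is linearly related. So the plan is to concentrate entirely on the sufficiency direction: assuming (1) and (2), reconstruct the separable form of Definition \ref{def1}.

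First I would pass from the statistical hypotheses to pointwise functional identities. Write $\boldsymbol y=(x_1,\dots,x_m)$ and $\boldsymbol z=(x_{m+1},\dots,x_n)$, so that $f=f(\boldsymbol y,\boldsymbol z)$, and fix reference points $\boldsymbol y_0,\boldsymbol z_0$. By the simplified form of Lemma \ref{lemma0}, a sample correlation equal to $1$ on \emph{every} sample set means the underlying continuous quantities are exactly affinely related; because this must hold on arbitrary samples and $f$ is continuous, the affine coefficients cannot depend on the sample, only on the fixed block. Condition (1) (fixed $\boldsymbol y$) then yields, for every $\boldsymbol y$, an identity $f(\boldsymbol y,\boldsymbol z)=a(\boldsymbol y)+b(\boldsymbol y)\,h(\boldsymbol z)$ with $h(\boldsymbol z):=f(\boldsymbol y_0,\boldsymbol z)$ and $b(\boldsymbol y)\neq 0$, while condition (2) (fixed $\boldsymbol z$) yields $f(\boldsymbol y,\boldsymbol z)=c(\boldsymbol z)+d(\boldsymbol z)\,g(\boldsymbol y)$ with $g(\boldsymbol y):=f(\boldsymbol y,\boldsymbol z_0)$ and $d(\boldsymbol z)\neq 0$.

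Next I would cross-compare the two representations. Equating them and evaluating at two points $\boldsymbol y_1,\boldsymbol y_2$ with $g(\boldsymbol y_1)\neq g(\boldsymbol y_2)$ (if no such pair exists, $g$ is constant, $f$ depends on $\boldsymbol z$ alone, and the conclusion is a trivial separable case), subtraction eliminates $c(\boldsymbol z)$ and forces $d(\boldsymbol z)$ to be affine in $h(\boldsymbol z)$, say $d=\alpha+\beta h$; back-substitution then shows $c$ is affine in $h$ as well, $c=\gamma+\delta h$. Reinserting these gives the bilinear normal form
\[ f(\boldsymbol y,\boldsymbol z)=\gamma+\alpha\,g(\boldsymbol y)+\delta\,h(\boldsymbol z)+\beta\,g(\boldsymbol y)\,h(\boldsymbol z). \]
From here the decomposition of Definition \ref{def1} can be read off by cases: if $\beta=0$ the form is additive, $f=\gamma+\alpha g(\boldsymbol y)+\delta h(\boldsymbol z)$, i.e. separability with $\otimes\in\{+,-\}$; if $\beta\neq 0$, completing the factorization gives $f=c_0+\varphi_1(\boldsymbol y)\,\varphi_2(\boldsymbol z)$ with $\varphi_1=g+\delta/\beta$, $\varphi_2=\beta h+\alpha$ and $c_0=\gamma-\alpha\delta/\beta$, which matches the chained form $c_0\otimes_1\varphi_1\otimes_2\varphi_2$ of Definition \ref{def1}.

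The step I expect to be the real obstacle is the first one: rigorously justifying that the correlation hypotheses translate into exact, globally valid affine identities with sample-independent coefficients. This requires the non-degeneracy conditions ($b,d\neq 0$, and the existence of a block value on which $f$ is non-constant) together with continuity, in order to promote ``equality on every finite random sample'' to ``equality on the whole domain''; the subsequent algebra and the final case split are then routine. A secondary point worth flagging is the additive constant $c_0$ that survives in the multiplicative case — it is admissible only because Definition \ref{def1} carries a leading constant, and not under the bare binary form $\varphi_1\otimes\varphi_2$ used inside Lemma \ref{lemma1} and Lemma \ref{lemma2}.
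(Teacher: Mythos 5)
Your proof is correct and takes a genuinely different route from the paper's. The necessity half is identical (both invoke Lemma \ref{lemma1} and Lemma \ref{lemma2}), but for sufficiency the paper argues by contradiction: it posits that non-separability must manifest as a shared variable appearing in both sub-functions, $f=\varphi_1(x_1,\dots,x_m)\otimes\varphi_2(x_m,x_{m+1},\dots,x_n)$ (Equation (\ref{assume})), re-runs the first correlation test with this overlap, and notes that $\boldsymbol f'_A=\varphi_1(X_1)\otimes\boldsymbol\alpha$ and $\boldsymbol f'_B=\varphi_1(X_1)\otimes\boldsymbol\beta$ are then tied together only by non-constant vectors ($\boldsymbol\gamma$ or $\boldsymbol\alpha-\boldsymbol\beta$), so no scalar affine relation $\boldsymbol f'_A=k_1\boldsymbol f'_B+k_2$ can hold, contradicting condition (2). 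Your proof is instead direct and constructive: the two conditions give the representations $f=a(\boldsymbol y)+b(\boldsymbol y)\,h(\boldsymbol z)$ and $f=c(\boldsymbol z)+d(\boldsymbol z)\,g(\boldsymbol y)$, cross-comparison forces $c$ and $d$ to be affine in $h$, and the resulting bilinear normal form $\gamma+\alpha g+\delta h+\beta gh$ factors into the additive ($\beta=0$) or multiplicative ($\beta\neq0$) case of Definition \ref{def1}. Your route buys two real advantages: it exhibits the sub-functions explicitly as affine images of the slices $f(\boldsymbol y,\boldsymbol z_0)$ and $f(\boldsymbol y_0,\boldsymbol z)$, and it avoids the paper's weakest step, namely the unjustified presupposition that every failure of separability takes the overlapping two-factor form (\ref{assume}) --- a reduction that essentially assumes the shape of the answer. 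Your closing remark about $c_0$ is also a genuine sharpening rather than a side note: when $\beta\neq0$ and $\gamma-\alpha\delta/\beta\neq0$, the bilinear form satisfies both correlation conditions yet cannot be written as $\varphi_1\otimes\varphi_2$, so the theorem is true only because Definition \ref{def1} carries a leading constant; the paper's proof, which works with the bare form $\varphi_1\otimes\varphi_2$ throughout, never confronts this distinction. The one gap you share with the paper is the promotion of finite-sample linear relations to exact, sample-independent affine identities; you at least flag it and sketch the continuity/union-of-samples argument, whereas the paper assumes it silently.
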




\begin{proof}
		From Lemma \ref{lemma1}, and Lemma \ref{lemma2}, we can conclude that the necessary conditions of the theorem hold. The sufficient conditions can be proved by contradiction. Suppose the separable form $f\left( \boldsymbol{x}  \right) = {\varphi _1}\left( {{x_1},{x_2}, \cdots ,{x_m}} \right) \otimes {\varphi _2}\left( {{x_{m + 1}},{x_{m + 2}} \cdots ,{x_n}} \right)$ can not be derived from the above two conditions. Thus, there is at least one non-separable variable presented in both sub-functions, ${\varphi _1}$ and ${\varphi _2}$. Without loss of generality, we assume ${x_m}$ to be this non-separable variable. That is,
		\begin{equation}
        \label{assume}
		f\left( \boldsymbol{x} \right) = {\varphi _1}\left( {{x_1},{x_2}, \cdots ,{x_m}} \right) \otimes {\varphi _2}\left( {{x_m},{x_{m + 1}},{x_{m + 2}} \cdots ,{x_n}} \right).
		\end{equation}
		Similarly, the process of sampling for the first correlation test can be given as
		\begin{equation}
		{X_1} = \left[ {\begin{array}{*{20}{c}}
			{x_1^{\left( 1 \right)}}&{x_2^{\left( 1 \right)}}& \cdots &{x_m^{\left( 1 \right)}} \\ 
			{x_1^{\left( 2 \right)}}&{x_2^{\left( 2 \right)}}& \cdots &{x_m^{\left( 2 \right)}} \\ 
			\vdots & \vdots &{}& \vdots  \\ 
			{x_1^{\left( N \right)}}&{x_2^{\left( N \right)}}& \cdots &{x_m^{\left( N \right)}} 
			\end{array}} \right],
		\end{equation}
		
		\begin{equation}
		\tilde X_2^{\left( A \right)} = \left[ {\begin{array}{*{20}{c}}
			{x_m^{\left( 1 \right)}}&{x_{m + 1}^{\left( A \right)}}&{x_{m + 2}^{\left( A \right)}}& \cdots &{x_n^{\left( A \right)}} \\ 
			{x_m^{\left( 2 \right)}}&{x_{m + 1}^{\left( A \right)}}&{x_{m + 2}^{\left( A \right)}}& \cdots &{x_m^{\left( A \right)}} \\ 
			\vdots & \vdots & \vdots &{}& \vdots  \\ 
			{x_m^{\left( N \right)}}&{x_{m + 1}^{\left( A \right)}}&{x_{m + 2}^{\left( A \right)}}& \cdots &{x_m^{\left( A \right)}} 
			\end{array}} \right],
		\end{equation}
		and
		\begin{equation}
		\tilde X_2^{\left( B \right)} = \left[ {\begin{array}{*{20}{c}}
			{x_m^{\left( 1 \right)}}&{x_{m + 1}^{\left( B \right)}}&{x_{m + 2}^{\left( B \right)}}& \cdots &{x_n^{\left( B \right)}} \\ 
			{x_m^{\left( 2 \right)}}&{x_{m + 1}^{\left( B \right)}}&{x_{m + 2}^{\left( B \right)}}& \cdots &{x_m^{\left( B \right)}} \\ 
			\vdots & \vdots & \vdots &{}& \vdots  \\ 
			{x_m^{\left( N \right)}}&{x_{m + 1}^{\left( B \right)}}&{x_{m + 2}^{\left( B \right)}}& \cdots &{x_m^{\left( B \right)}} 
			\end{array}} \right].
		\end{equation}
		Let the extended matrix ${X'_A} = \left[ {\begin{array}{*{20}{c}}
			{{X_1}}&{\tilde X_2^{\left( A \right)}} 
			\end{array}} \right]$, and ${X'_B} = \left[ {\begin{array}{*{20}{c}}
			{{X_1}}&{\tilde X_2^{\left( B \right)}} 
			\end{array}} \right]$. Thus, 		
		
\begin{equation}
		{\boldsymbol{f}'_A}{\text{ = }}f \left( {{{X'}_A}} \right){\text{ = }}{\varphi _1}\left( {{X_1}} \right) \otimes {\varphi _2}\left( {\tilde X_2^{\left( A \right)}} \right){\text{ = }}{\varphi _1}\left( {{X_1}} \right) \otimes \boldsymbol \alpha ,
\end{equation}
and
\begin{equation}
		{\boldsymbol{f}'_B}{\text{ = }}f \left( {{{X'}_B}} \right){\text{ = }}{\varphi _1}\left( {{X_1}} \right) \otimes {\varphi _2}\left( {\tilde X_2^{\left( B \right)}} \right){\text{ = }}{\varphi _1}\left( {{X_1}} \right) \otimes \boldsymbol \beta 
\end{equation}
		are defined, where $\boldsymbol \alpha$ and $\boldsymbol \beta$ are function-value vectors of ${\varphi _2}\left( {\tilde X_2^{\left( A \right)}} \right)$ and ${\varphi _2}\left( {\tilde X_2^{\left( B \right)}} \right)$ respectively. As a result,
\begin{equation}
{\boldsymbol{f}'_A}{\text{ = }}\left\{ 
{\begin{array}{*{20}{l}}
  \boldsymbol \gamma  \cdot {{\boldsymbol{f}'}_B} & {\text{  if } \otimes \text{ is times}} \\ 
  \boldsymbol \alpha  - \boldsymbol \beta  + {{\boldsymbol{f}'}_B} & {\text{  if } \otimes \text{ is plus}}\\  
  \boldsymbol \beta  - \boldsymbol \alpha  + {{\boldsymbol{f}'}_B} & {\text{  if } \otimes \text{ is minus}} \\ 
\end{array}}
        \right.
\end{equation}
where $\boldsymbol \gamma  = \left( {{{{\alpha _1}} \mathord{\left/
	    			{\vphantom {{{\alpha _1}} {{\beta _1}}}} \right.
	    			\kern-\nulldelimiterspace} {{\beta _1}}},{{{\alpha _2}} \mathord{\left/
	    			{\vphantom {{{\alpha _2}} {{\beta _2}}}} \right.
	    			\kern-\nulldelimiterspace} {{\beta _2}}}, \cdots ,{{{\alpha _N}} \mathord{\left/
	    			{\vphantom {{{\alpha _N}} {{\beta _N}}}} \right.
	    			\kern-\nulldelimiterspace} {{\beta _N}}}} \right)$. 
From the lemmas, we know that, two vectors $\boldsymbol{f}'_A$ and ${\boldsymbol{f}'}_B$ are linearly related if they are in the relation of $\boldsymbol{f}'_A = k_1\cdot {\boldsymbol{f}'}_B +k_2$, where $k_1$ and $k_2$ are constant scalars, $k_1\neq0$. But, from the above discussion, the components of all the three vectors, $\boldsymbol \gamma$, $\boldsymbol \alpha  - \boldsymbol \beta$ and $\boldsymbol \beta  - \boldsymbol \alpha$, are not constant, due to the randomness of sample points $\left( {x_m^{\left( 1 \right)},x_m^{\left( 2 \right)},\; \cdots ,x_m^{\left( N \right)}} \right)$. This contradicts the supposition that the two vectors $\boldsymbol{f}'_A$ and ${\boldsymbol{f}'}_B$ are linearly related, and so Equation (\ref{assume}) cannot hold.
\end{proof}

\subsection{Notes on BiCT}
The proposed technique is called bi-correlation test (BiCT) since two complementary correlation tests are simultaneously carried out to determine whether a variable or a variable-combination is separable. 

The above process is illustrated with two sub-functions, and it could be extended to determine the separability of a function with more sub-functions. However, if the binary operators $\otimes_1, \otimes_2, \cdots, \otimes_m$ in equation (\ref{eq3}) are mutually different with mixed times and plus or minus, the extension might be a little difficult. This issue will be left for the future work. Hereafter, we assume that the binary operators in  equation (\ref{eq3}) are the same, i.e, $\otimes_i$= times or $\otimes_i$= plus or minus for all $i=1, 2, \cdots, m$, for simplicity. In this case, the extension process is very easy and omitted here.

To enhance the stability and efficiency of the algorithm, the distribution of sample points should be as uniform as possible. Therefore, controlled sampling methods such as Latin hypercube sampling \citep{Beachkofski} and orthogonal sampling \citep{Steinberg2006} are preferred for sample generation.

For the correlation test, any of correlation methods could be used. That is, Pearson's r method, Spearman's rank order correlation, and Kendall's $\tau$ correlation are all effective for BiCT.

Take the function $ f(\boldsymbol x)=0.8+0.6*(x_1^2+\cos(x_1))*\sin(x_2+x_3)*(x_2-x_3) , x \in [-3,3]^3$, as an example, the first sample set consists of 13 uniformly distributed points in $[-3,3]$, and the second sample set consists of 169 uniformly distributed points in $[-3,3]^2$. The correlation tests could be illustrated as in Fig. \ref{figure2}. As can be seen that the function-value vectors $\boldsymbol f^A$ and $\boldsymbol f^B$ are linearly related (Fig. \ref{fig2:b}), in which the variable $x_2$ and $x_3$ are fixed when considering the first variable $x_1$ (Fig. \ref{fig2:a}). Similarly, to find out the separability of variable combination ($x_2, x_3$), the first variable $x_1$ is fixed (Fig. \ref{fig2:c}). The corresponding function-value vectors $\boldsymbol f^C$ and $\boldsymbol f^D$ are linearly related (Fig. \ref{fig2:d})    

\begin{figure}
	\centering
	\subfigure[($x_2, x_3$) are fixed]{
		\includegraphics[width=0.47\linewidth] {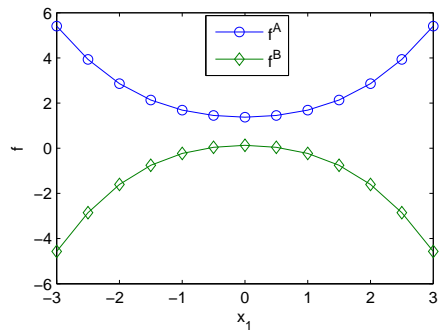}
		\label{fig2:a}
	}
	\subfigure[$\boldsymbol f^A\leftrightarrow \boldsymbol f^B$]{
		\includegraphics[width=0.47\linewidth] {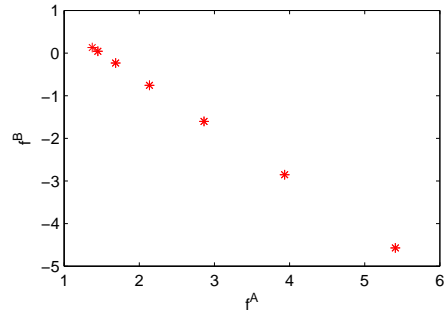}
		\label{fig2:b}
	}
		\subfigure[$x_1$ are fixed]{
			\includegraphics[width=0.47\linewidth] {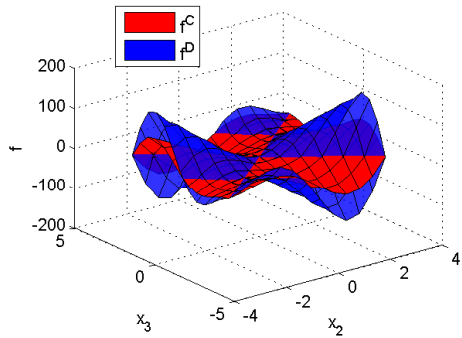}
			\label{fig2:c}
		}
		\subfigure[$\boldsymbol f^C\leftrightarrow \boldsymbol f^D$]{
			\includegraphics[width=0.47\linewidth] {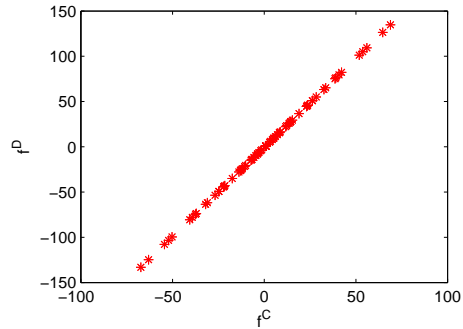}
			\label{fig2:d}
		}	
	\caption{Demo of separability detection process of BiCT}
	\label{figure2}
\end{figure}

The D\&C method with BiCT technique is described with functions of explicit expressions. While in practical applications, no explicit expression is available. In this case, some modifications need to adapt for D\&C method. In fact, for data-driven modeling problems, a surrogate model of black-box type could be established as the underlying target function \citep{Forrester2008} in advance. Then the rest steps are the same as above discussions.

\section{Numerical results}
\label{secNResults}
\subsection{Analysis on Computing time}

The computing time ($t$) of a genetic programming with the proposed divide and conquer (D\&C) method consists three parts: 
\begin{equation}
t=t_1+t_2+t_3
\label{eqTiming}
\end{equation}
where $t_1$ is for the separability detection, $t_2$ for sub-function determination, and $t_3$ for function recover. Note that both the separability detection and function recover processes are double-precision operations and thus cost much less time than the sub-function determination process. That is, $t \approx t_2$. 

It is obviously that the CPU time for determining all sub-functions ($t_2$) is much less than that of determining the target function directly ($t_d$). Next, a typical genetic programming, parse matrix evolution (PME), is taken as the optimization driver (other GP algorithms should also work) to show the performance of D\&C.

\begin{table}
\centering
\caption{A mapping table for parse-matrix evolution}%
\label{tabMapping}%
\begin{tabular}
[c]{c|ccccccccccc}\hline\hline
$a_{\cdot1}$ & -5 & -4 & -3 & -2 & -1 & 0 & 1 & 2 & 3 & 4 &5\\
T & $\sqrt{\cdot}$ & ln & cos & / & - & skip & + & * & sin & exp & $ (\cdot)^2$\\\hline
$a_{\cdot2},a_{\cdot3}$ & -5 & -4 & -3 & -2 & -1 & 0 & 1 & 2 & $\cdots$ & d & \\
expr & $\lambda_2$ & $\lambda_1$ & $f$ & $f_{2}$ & $f_{1}$ & 1.0 & $x_{1}$ & $x_{2}$ & $\cdots$
& $x_{d}$ & \\\hline
$a_{\cdot4}$ & -1 & 0 & 1 &  &  &  &  &  & \\
$f\rightarrow f_{k}$ & skip & $f_{1}$ & $f_{2}$ &  &  &  &  &  &\\
\hline\hline
\end{tabular}
\end{table}

Suppose that the dimension of the target function is $d$, the height of the parse-matrix $h$, and the mapping Table as in Table \ref{tabMapping}, then the parse-matrix entries $a_{\cdot 1} \in {-5,-4, ...,4,5}$, $a_{\cdot j}\in {-5, -4, \cdots,d}$, ($j=1,2$), and $a_{\cdot4}\in {-1, 0, 1}$. Thus the parse-matrix $(a_{ij})_{h \times 4}$ have as many as $(11*(6+d)*(6+d)*3)^h$ possible combinations. 
Thus the CPU time of determining the target function directly satisfies
\begin{equation}
t_d \sim (11*(6+d)*(6+d)*3)^h.
\label{eqtd}
\end{equation}
This means that the searching time of determining a target function will increase exponentially with model complexity. Using D\&C, only the sub-functions are needed to determinate, and each sub-function has less dimension $d$ and less complexity $h$. Therefore, it will cost much less CPU time. In fact, by D\&C powered GP, the CPU time will increase only linearly with dimensions, provided that the target function is completely separable. 

Take equation (\ref{equ2}) in Section \ref{secIntro} as an example. Without D\&C, to search directly, the control parameters of PME should be set as follows:  $d=5$, $h\geq 9$. From equation (\ref{eqtd}), the order of required CPU time $t_d = O(2.58\cdot 10^{32})$. 

Using D\&C powered PME, the required CPU time will be much less. In fact, after the separation detection, the function is divided into for sub-functions as follows.
\[{q_s} = 1.83 \times {10^{ - 4}}\boxed{{v^3}} \cdot \boxed{\sqrt \rho  } \cdot \boxed{1/\sqrt R } \cdot \boxed{(1 - {h_w}/{h_s})}\]
For the sub-function $v^3$, $1/\sqrt{R}$, and $1-h_w/h_s$, the control parameters of PME should be set as $d=1$, $h\geq 2$. For the sub-function $\sqrt{\rho}$, $d=1$, $h\geq 1$. As a result, $t_2 \approx 4* O(2.61\cdot 10^{6})=O(10^{7})$ by equation (\ref{eqtd}), which means the D\&C method could reduce the computational effort by orders of magnitude.

\subsection{Program timing}

Next, two illustrative examples are presented to show how much time the D\&C technique could save in practical applications.

Again, equation (\ref{equ1}) and equation (\ref{equ2}) are set as the target function, respectively. For equation (\ref{equ1}), the sample set consists 100 observations uniformly distributed in [1,10] degree and  [1000, 10000] (i.e., $\theta$=1:10; $Re_x$ = 1000:1000:10000). The angle $\theta$ is fixed to 5 degree while detecting the sub-function $f_1(Re_x)$, and the Renold number $Re_x$ is fixed to 5000 while detecting the sub-function $f_2(\theta)$. 

For equation (\ref{equ2}), the sample set consists 30000 observations uniformly distributed in a box in $R^5$ (i.e., $v=500:100:1000;
\rho=0.0001:0.0001:0.001;
R=0.01:0.01:0.1;
h_w=10000:10000:50000;
h_s=100000:100000:1000000$). 
The velocity of free-stream $v$, density of air $\rho$, radius of nose $R$, Wall enthalpy $h_w$, and total enthalpy $h_s$ are fixed to 800 $m/s^2$, 0.0005 kg/$m^3$, 0.05 m, 20000 J/kg, and 200000 J/kg, respectively, while detecting the sub-functions. 

In both tests, the program stops when the current model is believed good enough: $1-R^2<1.0\cdot 10^{-10}$, where $R^2=1-\frac{SSE}{SST}$ can be regarded as the fraction of the total sum of squares that has `explained by' the model. To suppress the affect of randomness of PME, 10 runs are carried out for each target function, and the averaged CPU time on a PC using a single CPU core (Intel(R) Core (TM) i7-4790 CPU @3.60GHz) is recorded to show its performance. The test results (see Table \ref{tab:eg1}, and Table \ref{tab:eg2}) show that D\&C technique can save CPU time remarkably. For equation (\ref{equ1}), PME needs about 12 minutes and 26 seconds to get an alternative of the target function without D\&C technique, while the D\&C powered PME needs only about 11.2 seconds, which is much faster than the original algorithm. Similar conclusion could also seen from the test results of equation (\ref{equ2}) (see Table \ref{tab:eg2}). Note that the total time of D\&C powered PME includes $t_1$, $t_2$ and $t_3$ (See Equ. \ref{eqTiming}), and $t_1+t_3\approx 0.2$ for Equ. (\ref{equ1}), 0.3 for Equ. (\ref{equ2}).

\begin{table}[htbp]
  \centering
  \caption{Performance of PME on detecting Equ. (\ref{equ1}) (with and without D\&C)} 
    \begin{tabular}{lll}
    \hline\hline
    Target Function & CPU time & Result Expression \\
    \hline
    \multicolumn{3}{l}{For D\&C powered PME} \\
    $f_1(Re_x)$ & 3s    & $St = 0.1978/\sqrt{Re_x}$ \\
    $f_2(\theta)$ & 8s$^\ast$    & $St = 0.03215*\theta - 0.01319*\theta^3$ \\
    Total time & 11.2s& \\
    \hline
     \multicolumn{3}{l}{PME without D\&C} \\
    $f(Re_x, \theta)$ & 12m26s$^\ast$   & $St = 2.274*\theta*\cos(0.9116*\theta)/\sqrt{Re_x} $\\
        Total time & 746s& \\
    \hline\hline
    \end{tabular}%
    \vspace{1ex}
   \\ $^\ast$ PME failed to get the exact result, but always result in an alternative function with fitting error of zero in double precision (i.e., $1-R^2=0.0$).
  \label{tab:eg1}%
\end{table}%

\begin{table}[htbp]
  \centering
  \caption{Performance of PME on detecting target Equ. (\ref{equ2})(with and without D\&C)}
    \begin{tabular}{lll}
    \hline\hline
    Target Function & CPU time & Result Expression \\
    \hline
    \multicolumn{3}{l}{For D\&C powered PME} \\
    $f_1(v)$ & 3s    & $q_s = 1.647\cdot 10^{-5}*v^3$ \\
    $f_2(\rho)$ & 2s    & $q_s =3.77\cdot 10^{5}*\sqrt\rho$ \\
    $f_3(R)$ & 4s    & $q_s = 6.49\cdot 10^{3}*\sqrt{0.08442/R} $\\
    $f_4(h_w, h_s)$ & 9s   & $q_s = 9370 -9370*h_w/h_s $\\
    Total time & 18.3s& \\
    \hline
    \multicolumn{3}{l}{PME without D\&C} \\
    $f(v, \rho, R, hw, hs)$ & 85m43s & $q_s = 0.000183*v^3*\sqrt{\rho/R} *(1-h_w/h_s)$ \\
     Total time & 5143s& \\
    \hline\hline
    \end{tabular}%
  \label{tab:eg2}%
\end{table}%

\section{Conclusion}
\label{SecConclusion}

The divide and conquer (D\&C) method for symbolic regression has been presented. The main idea is to make use of the separability feature of the underling target function to simplify the search process. In D\&C, the target function is divided into a number of sub-functions based on the information of separability detection, and the sub-functions are then determined by any of a genetic programming (GP) algorithms. 

The most important and fundamental step in D\&C is to identify the separability feather of the concerned system. To fulfill this task, a special algorithm, bi-correlation test (BiCT), is also provided for separability detection in this paper.  

The study shows that D\&C can accelerate the convergence speed of GP by orders of magnitude without losing the generality, provided that the target function has the feature of separability, which is usually the case in practical engineering applications.






\section*{References}
\bibliography{main}

\begin{thebibliography}{12}
\expandafter\ifx\csname natexlab\endcsname\relax\def\natexlab#1{#1}\fi
\expandafter\ifx\csname url\endcsname\relax
  \def\url#1{\texttt{#1}}\fi
\expandafter\ifx\csname urlprefix\endcsname\relax\def\urlprefix{URL }\fi

\bibitem[{Anderson(2006)}]{Anderson2006}
Anderson, J., 2006. Hypersonic and High-Temperature Gas Dynamics (2nd ed.).
  American Institute of Aeronautics and Astronautics, Inc., Virginia.

\bibitem[{Beachkofski and Grandhi(2002)}]{Beachkofski}
Beachkofski, B., Grandhi, R., APRL 2002. Improved distributed hypercube
  sampling. In: 43rd AIAA/ASME/ASCE/AHS/ASC Structures, Structural Dynamics,
  and Materials Conference, Structures, Structural Dynamics, and Materials and
  Co-located Conferences. AIAA paper no. 2002-1274. Denver, Colorado.
\newline\urlprefix\url{https://doi.org/10.2514/6.2002-1274}

\bibitem[{Berenguel et~al.(2013)Berenguel, Casado, García, Hendrix, and
  Messine}]{Berenguel2013}
Berenguel, L., Casado, L.~G., García, I., Hendrix, E. M.~T., Messine, F.,
  2013. On interval branch-and-bound for additively separable functions with
  common variables. Journal of Global Optimization 56~(3), 1101--1121.

\bibitem[{Chen et~al.(2017)Chen, Luo, and Jiang}]{Chen2017}
Chen, C., Luo, C., Jiang, Z., 2017. Elite bases regression: A real-time
  algorithm for symbolic regression.
\newline\urlprefix\url{https://arxiv.org/abs/1704.07313}

\bibitem[{d'Avezac et~al.(2011)d'Avezac, Botts, Mohlenkamp, and
  Zunger}]{dAvezac2011}
d'Avezac, M., Botts, R., Mohlenkamp, M.~J., Zunger, A., 2011. Learning to
  predict physical properties using sums of separable functions. SIAM Journal
  on Scientific Computing 33~(6), 3381--3401.

\bibitem[{Forrester et~al.(2008)Forrester, Sobester, and Keane}]{Forrester2008}
Forrester, A., Sobester, A., Keane, A., 2008. Engineering design via surrogate
  modelling: a practical guide. John Wiley \& Sons, Hoboken, New Jersey.

\bibitem[{Koza(2008)}]{Koza92}
Koza, J.~R., 2008. Genetic programming: on the programming of computers by
  means of natural selection. MIT Press, Cambridge, MA.

\bibitem[{Luo and Zhang(2012)}]{Luo2012}
Luo, C., Zhang, S.-L., 2012. Parse-matrix evolution for symbolic regression.
  Engineering Applications of Artificial Intelligence 25~(6), 1182--1193.

\bibitem[{McConaghy(2011)}]{McConaghy2011}
McConaghy, T., 2011. FFX: Fast, Scalable, Deterministic Symbolic Regression
  Technology. Springer New York, New York, NY, pp. 235--260.

\bibitem[{O'Neill and Ryan(2001)}]{ONeill01}
O'Neill, M., Ryan, C., Aug. 2001. Grammatical evolution. IEEE Trans. Evol. Comp
  5~(4), 349--358.
\newline\urlprefix\url{http://dx.doi.org/10.1109/4235.942529}

\bibitem[{Schmidt and Lipson(2009)}]{Schmidt2009}
Schmidt, M., Lipson, H., 2009. Distilling free-form natural laws from
  experimental data. Science 324~(59236), 81--85.

\bibitem[{Steinberg and Lin(2006)}]{Steinberg2006}
Steinberg, D.~M., Lin, D. K.~J., 2006. A construction method for orthogonal
  latin hypercube designs. Biometrika 93~(2), 279--288.

\end{thebibliography}

\end{document}